


\documentclass{article}
\pdfpagewidth=8.5in
\pdfpageheight=11in

\usepackage{kr}

\usepackage{times}

\usepackage{latexsym}
\usepackage{amssymb}
\usepackage{amsmath}
\usepackage{amsthm}
\usepackage{booktabs}
\usepackage{enumitem}
\usepackage{graphicx}
\usepackage{array}
\usepackage{multirow}
\usepackage{amsfonts}
\usepackage{subcaption}
\usepackage[ruled,vlined,linesnumbered]{algorithm2e}
\usepackage{xspace}
\usepackage{caption}
\usepackage{makecell}
\usepackage[table]{xcolor}

\newcommand{\fluents}{\ensuremath{F}\xspace}
\newcommand{\actions}{\ensuremath{A}\xspace}
\newcommand{\init}{\ensuremath{I}\xspace}
\newcommand{\goal}{\ensuremath{G}\xspace}
\newcommand{\cost}{\ensuremath{c}\xspace}

\newcommand{\task}{\ensuremath{{\cal P}}\xspace}
\newcommand{\plan}{\ensuremath{\pi}\xspace}
\newcommand{\plans}{\ensuremath{\Pi}\xspace}
\newcommand{\stripstask}{\ensuremath{\task=\langle \fluents, \actions, \init, \goal \rangle}\xspace}
\newcommand{\state}{\ensuremath{s}\xspace}

\newcommand{\action}{\ensuremath{a}\xspace}
\newcommand{\name}{\ensuremath{\textsc{name}}\xspace}
\newcommand{\precondition}{\ensuremath{\textsc{pre}}\xspace}

\newcommand{\addeffects}{\ensuremath{\textsc{add}}\xspace}
\newcommand{\deleffects}{\ensuremath{\textsc{del}}\xspace}
\newcommand{\actionapplication}{\ensuremath{\gamma}\xspace}
\newcommand{\planapplication}{\ensuremath{\Gamma}\xspace}
\newcommand{\alternativeplans}[1]{\ensuremath{\plans^{#1}}\xspace}
\newcommand{\alternativeplan}{\ensuremath{\plan^{\mathsf{a}}}\xspace}

\newcommand{\commitfluents}{\ensuremath{F_\mathsf{c}}\xspace}
\newcommand{\commitactions}{\ensuremath{A_\mathsf{c}}\xspace}

\newcommand{\commitgoal}{\ensuremath{G_\mathsf{c}}\xspace}
\newcommand{\goalhat}{\bar{\ensuremath{G}}\xspace}
\newcommand{\subgoal}{\ensuremath{g}\xspace}
\newcommand{\Sticky}{\mathsf{commit}\xspace}
\newcommand{\sticky}{\mbox{-}\mathsf{commit}\xspace}
\newcommand{\forcesticky}{\mbox{-}\mathsf{forcecommit}\xspace}
\newcommand{\Forcesticky}{\mathsf{forcecommit}\xspace}
\newcommand{\actionssticky}{\actions^{\mathsf{C}}\xspace}
\newcommand{\actionsnoncommit}{\actions^{\neg\mathsf{C}}\xspace}
\newcommand{\actionnoncommit}{\action^{\neg\mathsf{C}}\xspace}
\newcommand{\actioncommit}{\action^{\mathsf{C}}\xspace}
\newcommand{\actionsaddgoals}{\actions^\goal\xspace}
\newcommand{\actionsdeletegoals}{\actions^{\neg\goal}\xspace}
\newcommand{\actionslonely}{\actions^\ensuremath{L}\xspace}
\newcommand{\stickytask}{\ensuremath{{\cal P}_{\mathsf{c}}}\xspace}
\newcommand{\actionaddgoals}{\action^\goal\xspace}
\newcommand{\actiondeletegoals}{\action^{\neg\goal}\xspace}
\newcommand{\commitgoals}{\ensuremath{C}_{\actionaddgoals}\xspace}

\newcommand{\actionsdeladdgoals}{\actions^{\goal^{\star}}\xspace}
\newcommand{\actiondeladdgoals}{\action^{\goal^{\star}}\xspace}
\newcommand{\commitdeladdgoals}{\ensuremath{C}_{\actiondeladdgoals}\xspace}
\newcommand{\actionssim}{\actions^{\mathsf{S}}}
\newcommand{\actionsim}{\action^{\mathsf{S}}}

\newcommand{\simultaneous}{\mbox{-}\mathsf{simultaneous}\xspace}


\newcommand{\lamafirst}{\ensuremath{\textsc{lamaF}}\xspace}
\newcommand{\lmcut}{\ensuremath{\textsc{lmcut}}\xspace}


\newtheorem{theorem}{Theorem}

\newtheorem{corollary}[theorem]{Corollary}

\newtheorem{definition}{Definition}



\pdfinfo{
/TemplateVersion (KR.2022.0, KR.2023.0, KR.2024.0, KR.2025.0)
}

\title{A Planning Compilation to Reason about Goal Achievement at Planning Time}


\author{%
Alberto Pozanco\and
Marianela Morales\and
Daniel Borrajo\and
Manuela Veloso \\
\affiliations
J.P. Morgan AI Research\\
\emails
\{alberto.pozancolancho,marianela.moraleselena\}@jpmorgan.com, \{name.surname\}@jpmorgan.com}

\begin{document}

\maketitle

\begin{abstract}
Identifying the specific actions that achieve goals when solving a planning task might be beneficial for various planning applications.
Traditionally, this identification occurs post-search, as some actions may temporarily achieve goals that are later undone and re-achieved by other actions.
In this paper, we propose a compilation that extends the original planning task with commit actions that enforce the persistence of specific goals once achieved, allowing planners to identify permanent goal achievement during planning.
Experimental results indicate that solving the reformulated tasks does not incur on any additional overhead both when performing optimal and suboptimal planning, while providing useful information for some downstream tasks.
\end{abstract}

\section{Introduction}
Automated Planning involves determining a sequence of actions, or a plan, to achieve a set of goals from an initial state~\cite{DBLP:books/daglib/0014222}.
Identifying the specific actions that achieve goals when solving a planning task might be beneficial for various planning applications.
For example, it can be valuable for attributing goal achievement to agents in centralized multi-agent planning~\cite{pozanco2022fairness}; or for analyzing goal achievement distribution throughout the plan.
However, this identification can only be done at the end of planning by analyzing the returned plan, since some actions may temporarily achieve goals that are later undone and re-achieved by other actions.

Consider the \textsc{sokoban} task illustrated in Figure~\ref{fig:sokoban_task}, where two agents are responsible for pushing stones to their designated goal locations, marked in green. 
One possible plan to accomplish this task involves the orange agent pushing a stone three times to the right, followed by the blue agent pushing the other stone once downward. 
In this plan, the first stone temporarily occupies a goal location before being moved to its final destination. 
Through post-processing, we can easily identify that it is not the initial action of the orange agent, but rather the final push executed by the blue agent, that successfully achieves the specific goal. 
However, there is currently no standard mechanism to determine at planning time, whether a goal proposition that becomes true  will maintain its truth value until the plan's completion.
This limitation hinders the community from developing planning solutions that not only assess goal achievement but also reason about and optimize it. 
In the \textsc{sokoban} example, possessing this capability would enable us to create plans that prioritize specific goal achievement distributions over others.
Examples include preferring more \emph{fair} distributions of goal to agents; or plans that minimize the number of actions between the achievement of consecutive goals.

\begin{figure}
    \centering
\includegraphics[width=0.38\columnwidth]{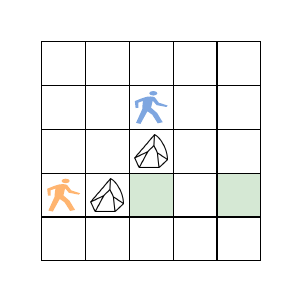}
     \caption{\textsc{sokoban} task where two agents are responsible for pushing stones to their goal locations (in green).} 
     \label{fig:sokoban_task}
    \vspace{-2.0mm}
\end{figure}

In this paper, we introduce a compilation~\cite{nebel2000compilability} that extends the original planning task by incorporating commit actions.
These actions enable the planner to ensure the persistence of specific goals once they are achieved, thus committing to them within the search sub-tree. 
This approach allows the planner to determine, during the planning phase, when an action permanently achieves a goal. 
In the previous \textsc{sokoban} example, the agents would have two options: either perform the standard push or execute a push-$\Sticky$ action, which guarantees that the stone will remain in its goal location for the remainder of the planning episode.

Experimental results across a comprehensive benchmark demonstrate that solving the reformulated tasks incurs in no additional overhead, whether in optimal or suboptimal planning.
This highlights that our compilation enables reasoning about goal achievement during the planning phase without any added overhead.

\section{Background}
We formally define a planning task as follows:
\begin{definition}\label{def:strips-plan-task}
  A {\sc strips} \textbf{planning task}
is a tuple \stripstask, where \fluents is a set of fluents, \actions is a set of
 actions, $\init \subseteq \fluents$ is an initial state, and $\goal\subseteq \fluents$ is a goal specification.  
\end{definition}

A state $\state \subseteq \fluents$ is a set of fluents that are true at a given time.
A state $\state \subseteq \fluents$ is a goal state iff $\goal \subseteq \state$.
Each action $\action \in \actions$ is described by its name $\name(\action)$, a set of positive and negative preconditions $\precondition^+(\action)$ and $\precondition^-(\action)$, add effects $\addeffects(\action)$, delete effects $\deleffects(\action)$, and cost $\cost(\action)$.
An action \action is applicable in a state \state iff $\precondition^+(\action)\subseteq~s$ and $\precondition^-(\action) \cap s = \emptyset$.
We define the result of applying an action in a state as $\actionapplication(\state,\action)=(\state \setminus \deleffects(\action)) \cup \addeffects(\action)$.
We assume $\deleffects(\action) \cap \addeffects(\action) = \emptyset$.
A sequence of actions $\plan=(\action_1,\ldots,\action_n)$ is applicable in a state $\state_0$ if there are states $(\state_1.\ldots,\state_n)$ such that $\action_i$ is applicable in $\state_{i-1}$ and $\state_i=\actionapplication(\state_{i-1},\action_i)$.
The resulting state after applying a sequence of actions is $\planapplication(\state,\pi)=\state_n$, and $\cost(\plan) = \sum_{i}^n \cost(\action_i)$ denotes the cost of $\plan$.
A state $\state$ is reachable from state $\state^\prime$ iff there exists an applicable action sequence \plan such that $s \subseteq \planapplication(\state^\prime,\pi)$.
The solution to a planning task $\task$ is a plan, i.e., a sequence of actions $\plan$ such that $\goal \subseteq \planapplication(\init,\pi)$.
We denote as $\Pi(\task)$ the set of all solution plans to planning task $\task$. 
Also, given a plan \plan, we denote its alternatives, i.e., all the other sequence of actions that can solve \task as $\alternativeplans{\plan} = \Pi(\task) \setminus \pi$.
A plan with minimal cost is optimal.

\section{Planning with Commit Actions}
The aim of our compilation is to extend the original task with $\Sticky$ actions that allow the planner to enforce the persistence of specific goals once they are achieved.
In particular, we will only focus on committing to those goals that are not already true in the initial state, and that can be achieved through actions in $A$.
We make this restriction because it is not possible to tell apriori whether the goals that are true in \init will need to be eventually undone to then be re-achieved by an action, or no action will need to ever falsify the goal in order to achieve $\goal$. We define pending goals as: 
\begin{definition}
    Let \stripstask be a planning task, the subset of \textbf{pending goals} $\goalhat \subseteq \goal$ is formally defined as:\small
    \[
    \goalhat =  \{\subgoal_{i} \in \goal\setminus\init \mid \exists \action \in \actions, \subgoal_i \in \addeffects(\action)\}
    \]
\end{definition}
To ensure that goals in $\goalhat$ remain achieved throughout the planning process, we extend the original set of propositions \fluents with a new set of $\Sticky$ propositions that we denote $\fluents'$. Each goal $\subgoal_{i} \in \goalhat$ is associated with a $\Sticky$ version, which tracks whether $\subgoal_{i}$ has been achieved and should remain true. Formally, we define $\fluents' = \bigcup_{\subgoal_{i} \in \goalhat} \{\subgoal_i\sticky\}$. 

We also extend the original actions to ensure the persistence of the $\Sticky$ propositions once they become true.
To do that, we differentiate four different groups within \actions based on their interaction with the goals.

\noindent\textbf{ (1) Actions that add and do not delete goals:} $\actionsaddgoals = \{\action \in \actions \mid \goalhat \cap \addeffects(\action)\not = \emptyset \land \goalhat\cap\deleffects(\action) = \emptyset\}$. Since actions can achieve multiple goal propositions, for each action $\actionaddgoals\in\actionsaddgoals$, we define $\commitgoals$ as a set containing all possible combinations of $\Sticky$ goals in $\addeffects(\actionaddgoals)\cap\goalhat$.
This set will have $2^n$ subsets, where $n$ is the number of goals in $\addeffects(\actionaddgoals)\cap\goalhat$. For instance, if $\addeffects(\actionaddgoals)\cap\goalhat = \{ x, y\}$, then $\commitgoals = \{ \{\}, \{x\}, \{ y\}, \{x,y\} \}$.
Next, we introduce a new set of $\Sticky$ actions $\actionssticky$: for each action $\actionaddgoals \in \actionsaddgoals$, we introduce $|\commitgoals|$ new actions. Each $\Sticky$ action $\actioncommit_{i}\in\actionssticky$, with $i \in \commitgoals$, is defined as: 
\begin{itemize}
\small
    \item $\name(\actioncommit_{i}) = \name(\actionaddgoals)\sticky\mbox{-}i$
    \item $\precondition^+(\actioncommit_{i})\hspace{-1.5mm}~=\hspace{-1.5mm}~\precondition^+(\actionaddgoals)$,
    \item $\precondition^-(\actioncommit_{i})\hspace{-1.5mm}~=\hspace{-1.5mm}~\precondition^-(\actionaddgoals)~\cup\{ \cup_{j \in i}  \subgoal_j\sticky \}$
    \item $\deleffects(\actioncommit_{i})\hspace{-1mm}=\hspace{-1mm} \deleffects(\actionaddgoals), \addeffects(\actioncommit_{i})\hspace{-1.5mm}~=~\hspace{-1.5mm}\addeffects(\actionaddgoals)~\cup~\{ \cup_{j \in i} \subgoal_j\sticky\}$

    \item $\cost(\actioncommit_{i}) = \cost(\actionaddgoals)$
\end{itemize}

\noindent\textbf{ (2) Actions that do not add but delete goals:} $\actionsdeletegoals = \{\action \in \actions \mid \goalhat \cap \addeffects(\action) = \emptyset \land\goalhat \cap \deleffects(\action)\not = \emptyset\}$. We must ensure these actions do not delete a goal that is already $\Sticky$. 
For each $\actiondeletegoals\in\actionsdeletegoals$, we substitute it with an updated $\Forcesticky$ action $\actionnoncommit$, which is added to a new set $\actionsnoncommit$ and defined as follows:
\begin{itemize}
    \small
        \item $\name(\actionnoncommit) = \name(\actiondeletegoals)\forcesticky$
        \item $\precondition^+(\actionnoncommit) = \precondition^+(\actiondeletegoals)$
        \item $\precondition^-(\actionnoncommit) \!=\! \precondition^-(\actiondeletegoals) \cup \{\subgoal_j\sticky \!\mid\! \subgoal_j \!\in \! \deleffects(\actiondeletegoals)\cap\goalhat\}$ 
        \item $\deleffects(\actionnoncommit) = \deleffects(\actiondeletegoals), \addeffects(\actionnoncommit) = \addeffects(\actiondeletegoals)$
        \item $\cost(\actionnoncommit) = \cost(\actiondeletegoals)$
\end{itemize}

\noindent\textbf{ (3) Actions that add and delete goals:} $\actionsdeladdgoals =  \{\action \in \actions \mid \goalhat \cap \addeffects(\action) \not= \emptyset \land \goalhat\cap\deleffects(\action) \not= \emptyset\}$. The behavior of the reformulated actions is defined as a combination of the previous two type of actions, and therefore they: (i) add $\Sticky$ propositions to achieve the goals in $\goalhat$; and (ii) ensure that these actions do not delete a goal that is already $\Sticky$.
For each $\actiondeladdgoals\in\actionsdeladdgoals$, we define $\commitdeladdgoals$ as the set of all the possible combinations of $\Sticky$ goals in $\addeffects(\actiondeladdgoals)\cap\goalhat$.
Next, we introduce a new set of simultaneous actions $\actionssim$: for each $\actiondeladdgoals\in\actionsdeladdgoals$, we introduce $|\commitdeladdgoals|$ new actions. Each action $\actionsim_{j}\in\actionssim$, with $j\in\commitdeladdgoals$, is defined~as: 
\begin{itemize}
\small
    \item $\name(\actionsim_{j}) = \name(\actiondeladdgoals)\simultaneous\mbox{-}j$
    \item $\precondition^+(\actionsim_{j}) = \precondition^+(\actiondeladdgoals)$
    \item $\precondition^-(\actionsim_{j})~\hspace{-1.5mm}=~\hspace{-1.5mm}\precondition^-(\actiondeladdgoals)~\hspace{-1.5mm}~\cup~\hspace{-1mm}~\{ \cup_{i \in j}  \subgoal_i\sticky\}~\hspace{-1mm}~\cup~\hspace{-1mm}~\{ \subgoal_i\sticky\mid\subgoal_i \in \deleffects(\actiondeladdgoals)\cap\goalhat\}$
    \item $\deleffects(\actionsim_{j})~\hspace{-2mm}=~\hspace{-2mm}\deleffects(\actiondeladdgoals)$, $\addeffects(\actionsim_{j})~\hspace{-2mm}=~\hspace{-2mm}\addeffects(\actiondeladdgoals)~\hspace{-1.2mm}\cup~\hspace{-1.2mm}\{\cup_{i \in j}  \subgoal_i\sticky\}$,
    \item $\cost(\actionsim_{j}) = \cost(\actiondeladdgoals)$
\end{itemize}

\noindent\textbf{ (4) Actions that neither add nor delete goals:} $\actionslonely= \{\action \in \actions \mid \goalhat \cap \addeffects(\action) = \emptyset \land  \goalhat \cap \deleffects(\action) = \emptyset\}$. We keep these actions unchanged.

The new planning task is defined as follows:
\begin{definition}
    Given a planning task \stripstask, a \textbf{commit planning task} is defined as a tuple $\stickytask = \langle\commitfluents, \commitactions, \init, \commitgoal\rangle$
    where,
    \begin{itemize}
    \small
        \item $\commitfluents = \fluents \cup \fluents'$, \hspace{-5mm}
        \item $\commitactions = \actionssticky \cup \actionsnoncommit \cup \actionssim \cup \actionslonely$
        \item $\commitgoal = \{\goal\setminus\goalhat\} \cup_{\subgoal_i \in \goalhat}\{\subgoal_i\sticky\}$
    \end{itemize}
\end{definition}

\begin{theorem}\label{thm:completeness}
    If \task is solvable, \stickytask is also solvable.
\end{theorem}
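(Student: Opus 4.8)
The plan is to take an arbitrary solution plan $\plan = (\action_1, \ldots, \action_n)$ for \task and constructively transform it into a solution $\plancommit$ for \stickytask, replacing each original action by one of its reformulated variants and choosing \emph{which} variant according to where goals are permanently achieved in $\plan$. The guiding observation is that every pending goal $\subgoal_i \in \goalhat$ is false in \init but true in $\planapplication(\init, \plan)$, so it has a well-defined \emph{last achiever}: the largest index $k_i$ with $\subgoal_i \in \addeffects(\action_{k_i})$. Because $k_i$ is the last action adding $\subgoal_i$ and $\subgoal_i$ still holds at the end of $\plan$, no action after position $k_i$ can delete $\subgoal_i$ (otherwise it would be false at the end). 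This monotonicity after the last achiever is the crux of the whole argument.

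First I would group the pending goals by their last achiever, defining for each index $k$ the commit set $S_k = \{\subgoal_i \in \goalhat \mid k_i = k\}$, and then build $\plancommit$ action by action: if $\action_k \in \actionslonely$ leave it unchanged; if $\action_k \in \actionsaddgoals$ replace it by the commit variant $\actioncommit_{S_k}$; if $\action_k \in \actionsdeletegoals$ replace it by its $\Forcesticky$ variant $\actionnoncommit$; and if $\action_k \in \actionsdeladdgoals$ replace it by the simultaneous variant $\actionsim_{S_k}$. I must check these variants exist, i.e.\ that $S_k \subseteq \addeffects(\action_k)\cap\goalhat$ so that $S_k \in \commitgoals$ (resp.\ $\commitdeladdgoals$); this is immediate since $k_i = k$ forces $\subgoal_i \in \addeffects(\action_k)\cap\goalhat$.

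The backbone of correctness is an induction on the plan prefix establishing two invariants about the state $\state'_k$ reached by the length-$k$ prefix of $\plancommit$ in \stickytask: (i) its restriction to the original fluents \fluents equals the state $\state_k = \planapplication(\init, (\action_1, \ldots, \action_k))$ reached by $\plan$; and (ii) a commit proposition $\subgoal_i\sticky \in \fluents'$ is true in $\state'_k$ iff $k \ge k_i$. Invariant (i) holds because every reformulated action has the same positive and negative preconditions over \fluents and the same add and delete effects over \fluents as its original (the extra literals live entirely in $\fluents'$), so the original-fluent trace is copied verbatim. Invariant (ii) holds because $\subgoal_i\sticky$ is added only by the unique variant chosen at position $k_i$ and is never deleted by any reformulated action.

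The step I expect to be the main obstacle is verifying applicability, specifically the \emph{new negative preconditions} the reformulation injects. By (i) the original preconditions hold exactly as in $\plan$. For the commit/simultaneous guard $\neg\subgoal_i\sticky$ on a goal committed at its last achiever $k_i$, invariant (ii) shows $\subgoal_i\sticky$ is still false in $\state'_{k_i-1}$, just before $\action_{k_i}$. For the $\Forcesticky$/simultaneous guard $\neg\subgoal_j\sticky$ on a pending goal $\subgoal_j \in \deleffects(\action_m)\cap\goalhat$ deleted at position $m$, the monotonicity observation gives $m < k_j$ (a deletion cannot occur at or after the last achiever), so by (ii) $\subgoal_j\sticky$ is false in $\state'_{m-1}$ and the guard is satisfied. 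Finally, goal achievement follows from the invariants at $k = n$: the non-pending goals $\goal\setminus\goalhat$ lie in $\state_n = \planapplication(\init,\plan) \supseteq \goal$ by (i), and each $\subgoal_i\sticky$ is true by (ii) since $n \ge k_i$; hence $\commitgoal \subseteq \planapplication(\init,\plancommit)$, so \stickytask is solvable.
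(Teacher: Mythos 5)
Your proof is correct and follows essentially the same strategy as the paper's: replace each action of the original plan by the variant that commits exactly to the goals for which it is the \emph{last} achiever, and keep $\Forcesticky$/non-commit variants elsewhere. You go further than the paper by making the ``last achiever'' index explicit, stating the two state invariants, and actually verifying that the injected negative preconditions (the $\sticky$ guards) are satisfied via the monotonicity-after-last-achiever observation --- details the paper's proof leaves implicit --- so your write-up is a strictly more rigorous rendering of the same argument.
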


\begin{proof}
    Let \plan be a solution for \task. We need to show that there exists a plan $\plan'$ that solves $\stickytask$ comprised by the same sequence of actions as \plan, only varying their version, i.e. we include actions that allow us to have $\Sticky$ and $\Forcesticky$ versions. For each action $\action\in\plan$, there could be these cases:

    Case 1. $\action\in\actionslonely$, then $\plan'$ will use $\action$.
    
    Case 2. $\action\in\actionsdeletegoals$ is replaced by its corresponding $\Forcesticky$ action $\action'\in\actionsnoncommit$. They differ only in their preconditions related to $\Sticky$ goals. If such action appears in $\plan$, it means there will be another action $\action'' \in \plan$ that will achieve the goal that is being temporary deleted.

    Case 3. $\action\in\actionsaddgoals$. We differentiate two sub-cases: (i) $\action$ is not the last action in the plan achieving the goals in $\addeffects(\action) \cap \goalhat$: this action is replaced by the original non-commit action that we keep in $\actionssticky$; and (ii) $\action$ is the last action in the plan achieving the goals in $\addeffects(\action) \cap \goalhat$: this action is replaced by its corresponding $\actioncommit_{i}\in\actionssticky$ where $i\in\commitgoals$.

    Case 4. $\action\in\actionsdeladdgoals$. %
    We differentiate two sub-cases as Case 3 and combine the preconditions related to $\Sticky$ goals as Case 2.

    For every $\subgoal_i\in\goal$,  either $\subgoal_i\in\init$, and is correctly achieved in $\stickytask$ by the same original actions in $\commitactions$; or $\subgoal_i\in\goalhat$  corresponds to a $\Sticky$ version $\subgoal_i\sticky\in\commitgoal$ and is correctly achieved by $\Sticky$ actions in $\actionssticky$.
\end{proof}

\begin{theorem}\label{thm:soundness}
    Any plan $\plan'$ that solves \stickytask is a plan for the original problem \task.
\end{theorem}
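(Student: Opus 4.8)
The plan is to exhibit a projection $\sigma$ that sends every action of $\commitactions$ back to the single original action of $\actions$ from which it was built, and then to show that applying $\sigma$ pointwise to $\plan'$ yields a solution of \task (this is the sense in which $\plan'$ ``is'' a plan for \task). By construction each compiled action restricts to its original on \fluents: a $\Sticky$ action $\actioncommit_{i}$ projects to $\actionaddgoals$, a $\Forcesticky$ action $\actionnoncommit$ to $\actiondeletegoals$, a $\simultaneous$ action $\actionsim_{j}$ to $\actiondeladdgoals$, and every $\action\in\actionslonely$ to itself. The crucial structural fact is that each compiled action keeps the same positive precondition, augments the negative precondition only with literals of $\fluents'$, keeps the same delete effects, and augments the add effects only with literals of $\fluents'$.

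First I would prove, by induction on the length of $\plan'$, that $\sigma(\plan')$ is applicable in \init and that the two executions agree on \fluents, i.e. the state reached by $\plan'$ intersected with \fluents equals the state reached by $\sigma(\plan')$. The base case is immediate since $\init\subseteq\fluents$. For the step, if a compiled action is applicable in a state $\state'$, its positive precondition (identical to the original's and contained in \fluents) holds in $\state'\cap\fluents$, and its negative precondition is disjoint from $\state'$, hence so is the smaller original negative precondition; thus the original action is applicable in $\state'\cap\fluents$. Because the extra add/delete literals all lie in $\fluents'$, the successor states again coincide on \fluents. This part is routine: the $\Sticky$ bookkeeping can never alter an original action's effect on the original fluents.

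The heart of the argument is the \emph{persistence invariant}: for every $\subgoal_i\in\goalhat$ and every state $\state'_k$ visited by $\plan'$, $\subgoal_i\sticky\in\state'_k$ implies $\subgoal_i\in\state'_k$. I would prove it by induction, the base case being vacuous since \init contains no $\Sticky$ proposition. For the step, note first that no action in $\commitactions$ has a delete effect in $\fluents'$, so a $\Sticky$ proposition, once true, stays true. If $\subgoal_i\sticky$ is freshly added by $\action'_k$, then $\action'_k\in\actionssticky\cup\actionssim$ and its add effects also contain $\subgoal_i$, because the commit index set only ranges over goals in $\addeffects(\action)\cap\goalhat$; since $\deleffects\cap\addeffects=\emptyset$, we get $\subgoal_i\in\state'_k$. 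If instead $\subgoal_i\sticky$ was already true in $\state'_{k-1}$, the hypothesis gives $\subgoal_i\in\state'_{k-1}$, and it remains to check $\action'_k$ does not delete $\subgoal_i$: the only compiled actions that delete a goal of $\goalhat$ are those in $\actionsnoncommit$ and $\actionssim$ (those in $\actionssticky$ and $\actionslonely$ delete no goal of $\goalhat$, by the very definition of the four groups), and each of these carries $\subgoal_i\sticky$ in its negative precondition whenever $\subgoal_i$ is among its delete effects, so it cannot fire while $\subgoal_i\sticky$ holds.

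The main obstacle is exactly this persistence invariant, and within it the verification that the added negative preconditions of $\actionsnoncommit$ and $\actionssim$ block precisely the goal-deleting actions while $\subgoal_i\sticky$ is true; the applicability/projection argument is comparatively mechanical. Once the invariant is in hand the conclusion is immediate: $\plan'$ reaches a state $\state'_n\supseteq\commitgoal$, so each $\subgoal_i\sticky\in\commitgoal$ forces $\subgoal_i\in\state'_n$, while the goals in $\goal\setminus\goalhat\subseteq\commitgoal$ already lie in \fluents and hence in $\state'_n\cap\fluents=\planapplication(\init,\sigma(\plan'))$. Therefore $\goal\subseteq\planapplication(\init,\sigma(\plan'))$, and $\sigma(\plan')$ solves \task.
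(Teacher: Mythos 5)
Your proposal is correct, and at the top level it follows the same route as the paper: map each compiled action back to the original action it was built from, argue that the extra negative preconditions only restrict applicability (so the projected plan remains executable and the two executions agree on \fluents), and read off goal achievement from $\commitgoal$. Where you genuinely go beyond the paper is the \emph{persistence invariant}: that $\subgoal_i\sticky\in\state'_k$ implies $\subgoal_i\in\state'_k$ in every visited state, proved by induction using (a) no compiled action deletes a proposition of $\fluents'$, (b) $\subgoal_i\sticky$ is only ever added simultaneously with $\subgoal_i$, and (c) every compiled action that deletes a $\subgoal_i\in\goalhat$ carries $\subgoal_i\sticky$ as a negative precondition and so is blocked once the commit literal holds. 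The paper's proof never states or proves this; it jumps from ``each $\subgoal_i'\in\commitgoal$ is either in $\goal$ or is the commit version of some $\subgoal_i\in\goal$'' directly to ``$\goal$ is correctly achieved,'' which is exactly the step that needs the invariant --- without it, nothing rules out that $\subgoal_i\sticky$ holds at the end while $\subgoal_i$ itself was deleted after being committed. Your version therefore fills a real gap: it isolates the one non-mechanical part of the argument (that the added negative preconditions block precisely the goal-deleting actions while the commit literal is true) and proves it, whereas the paper treats the whole theorem as routine bookkeeping.
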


\begin{proof}
    Let $\plan'$ be a solution for $\stickytask$. We need to show that there is a plan $\plan$ that can be mapped from $\plan'$ which is a solution for $\task$.
    Each action in $\plan'$ falls into one of four cases: (i) $\action\in\actionslonely$: we leave them as is in $\plan$; (ii) $\action\in\actionssticky$: they are substituted in $\plan$ by their counterparts in $\actionsaddgoals$; (iii) $\action\in\actionsnoncommit$: they are substituted in $\plan$ by their counterparts in $\actionsdeletegoals$, which are equivalent to the original actions in \task, regarding their effects. Their preconditions in $\stickytask$ include $\Sticky$ attributes, but this only restricts action applicability in $\stickytask$, not in \task; (iv) $\action \in \actionssim$: they are substituted in $\plan$ by their counterparts in $\actionsdeladdgoals$. Their preconditions in $\stickytask$ only restricts action applicability in $\stickytask$ as in (iii).
    Lastly, for all $\subgoal_i' \in \commitgoal$, we have that either $\subgoal_i' \in \goal$ or $\subgoal_i'$ is the $\Sticky$ version of a $\subgoal_i \in \goal$. Then $\goal$ is correctly achieved in $\task$.
\end{proof}

From Theorem~\ref{thm:completeness} and~\ref{thm:soundness}, along with the preservation of costs from the original actions to the commit ones, we can show that the optimality of plans solving $\task$ is maintained in the commit task $\stickytask$.

\begin{corollary}
     If a plan $\plan$ optimally solves \task, then there exists a plan $\plan'$ that optimally solves $\stickytask$.
\end{corollary}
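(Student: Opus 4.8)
The plan is to combine Theorem~\ref{thm:completeness} and Theorem~\ref{thm:soundness} with the observation that both the forward and the backward action mappings used in those proofs preserve cost, so that the set of achievable plan costs is identical in \task and \stickytask. Optimality then transfers for free.

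First I would record the cost-preservation fact explicitly, since it is the ingredient the two existing theorems do not yet use. In the construction every compiled action inherits the cost of the original action it is derived from: $\cost(\actioncommit_i)=\cost(\actionaddgoals)$ for $\Sticky$ actions, $\cost(\actionnoncommit)=\cost(\actiondeletegoals)$ for $\Forcesticky$ actions, $\cost(\actionsim_j)=\cost(\actiondeladdgoals)$ for simultaneous actions, and actions in $\actionslonely$ are left untouched. Consequently the action-by-action correspondence in the proof of Theorem~\ref{thm:completeness} maps a solution \plan of \task to a solution $\plan'$ of \stickytask with $\cost(\plan')=\cost(\plan)$, and the reverse correspondence in the proof of Theorem~\ref{thm:soundness} maps a solution $\plan'$ of \stickytask back to a solution \plan of \task with $\cost(\plan)=\cost(\plan')$.

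Then I would argue by contradiction. Let \plan optimally solve \task, with cost $c^\star$. By Theorem~\ref{thm:completeness} together with the forward cost preservation, there is a plan $\plan'$ solving \stickytask with $\cost(\plan')=c^\star$. Suppose $\plan'$ were not optimal for \stickytask; then some $\plan''\in\Pi(\stickytask)$ would satisfy $\cost(\plan'')<c^\star$. Applying Theorem~\ref{thm:soundness} and the backward cost preservation to $\plan''$ yields a solution of \task of the same cost $\cost(\plan'')<c^\star$, contradicting the optimality of \plan. Hence $\plan'$ is an optimal solution of \stickytask.

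The contradiction argument itself is routine; the only point requiring care — and thus the main obstacle — is confirming that cost is genuinely preserved in both directions across all four action categories. In particular I would make sure that the ``kept'' non-commit action used in Case~3(i) and Case~4 of Theorem~\ref{thm:completeness} is exactly the $i=\{\}$ member of $\actionssticky$ (respectively $\actionssim$), so that it still carries the original cost and no spurious cost is introduced or lost by the empty-commitment choice. Once the two cost equalities are established, the equality of optimal costs, and therefore the existence of an optimal $\plan'$, follows immediately.
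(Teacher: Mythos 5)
Your proposal is correct and follows essentially the same route as the paper's own proof: apply Theorem~\ref{thm:completeness} with cost preservation to obtain $\plan'$ with $\cost(\plan')=\cost(\plan)$, then derive a contradiction from a cheaper $\plan''$ via Theorem~\ref{thm:soundness} and the backward cost-preserving mapping. Your extra remark about the empty-commitment action carrying the original cost is a sensible sanity check but does not change the argument.
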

\begin{proof}
Let \plan be an optimal solution for \task. We need to show that there is a plan $\plan'$ that can be mapped from \plan which is an optimal solution for \stickytask.

From Theorem~1, given a solution $\plan$ for $\task$, then there exists a solution $\plan'$ for $\stickytask$. Since the cost of every action $\action'$ in $\commitactions$ is preserved from the cost of each action $\action\in\actions$, then we have that $\cost(\plan)=\cost(\plan')$. Let us now observe that if $\plan$ is optimal, then $\plan'$ is also optimal. Note that $\plan$ being optimal means that for all alternative plans $\alternativeplan \in \alternativeplans{\plan}$, we have that $\cost(\plan) \le \cost(\alternativeplan)$. 
Therefore, in order to show that $\plan'$ is optimal, we need to show that it does not exist $\plan''$ that solves $\stickytask$ and $\cost(\plan'')~<~\cost(\plan')$. 

By contradiction, let us suppose that there exists $\plan''$ that solves \stickytask such that $\plan''\neq\plan'$ and $\plan''$ is an optimal plan, i.e., $\cost(\plan'')<\cost(\plan')$. By Theorem~2, there exists $\plan'''$ that can be mapped from $\plan''$ and is a solution for $\task$. Since the costs are preserved, we have $\cost(\plan''') = \cost(\plan'')$. And giving $\cost(\plan''')=\cost(\plan'')< \cost(\plan')=\cost(\plan)$, then we have that $\plan$ is not optimal.  This is a contradiction from the assumption that $\plan'$ is not optimal. Thus, $\plan'$ is an optimal solution for \stickytask.
\end{proof}

Let us illustrate the compilation by considering a planning task \stripstask where $\fluents=\{x,y\}$, $\actions=\{\action_1, \action_2\}$, $\init=\{\}$, and $\goal=\{x,y\}$.
Actions are defined as follows: $(\action_1)
\precondition^+=\precondition^-=\{\},  \deleffects=\{\}, \addeffects=\{x\}$; and
$(\action_2)\precondition^+=\{x\}, \precondition^-=\{\},  \deleffects=\{x\}, \addeffects=\{y\}$.
The optimal plan for \task is $\plan = (\action_1, \action_2,\action_1)$.
Note that $x$ becomes true after the first execution of $\action_1$, but only permanently after its last execution, as $\action_2$ must temporarily delete it to achieve $y$.
Given that $\action_1\in\actionsaddgoals$, our compilation keeps the original action $\action_1$, and adds a commit version $(\action_1\sticky\mbox{-x})$ in $\actionssticky$ such that $\precondition^+=\{\},\precondition^-=\{x\sticky\},\deleffects=\{\}, \addeffects=\{x, x\sticky\}$.
On the other hand, given that $\action_2\in\actionsdeladdgoals$,  the following simultaneous actions are created in $\actionssim$:
($\action_2\simultaneous\mbox{-}\emptyset$) $\precondition^+=\{x\}, \precondition^-=\{x\sticky\},$
    $ \deleffects=\{x\}, \addeffects = \{y\}$; and
($\action_2\simultaneous\mbox{-y}$) $\precondition^+=\{x\}, \precondition^-=\{x\sticky,$ $ y\sticky\},$
    $ \deleffects=\{x\}, \addeffects = \{y, y\sticky\}$.
This gives us the following commit planning task, $\stickytask=\langle\commitfluents, \commitactions, \init, \commitgoal\rangle$ where $\commitactions = \{\action_1\sticky\mbox{-}\emptyset, \action_1\sticky\mbox{-x},$ $ \action_2\simultaneous\mbox{-}\emptyset,\action_2\simultaneous\mbox{-y}\}$ and $\commitgoal = \{x\sticky, y\sticky\}$. Then, there exists an optimal plan  in $\stickytask$ that can be mapped from $\plan$, $\plan_c = (\action_1\sticky\mbox{-}\emptyset, \action_2\simultaneous\mbox{-y}, \action_1\sticky\mbox{-x})$.
In $\plan_c$, unlike in \plan, we can determine during the search if an action permanently achieves a goal without post-processing.
\section{Evaluation}

\begin{table}[]
\footnotesize
\setlength{\tabcolsep}{0.5pt}
\setlength\extrarowheight{-2pt}
    \centering
    \begin{tabular}{l||r|r||r|r||r|r||}
         & \multicolumn{2}{c||}{Coverage} & \multicolumn{2}{c||}{SAT Score} & \multicolumn{2}{c||}{AGL Score} \\ \hline
         Domain (\#Tasks) & $\task$ & $\stickytask$ & $\task$ & $\stickytask$ & $\task$ & $\stickytask$ \\ \hline
         \rowcolor{gray!20} agricola18 (20)& \textbf{20} & \textbf{20}& \textbf{20.0} & \textbf{20.0}& \textbf{15.5} & \textbf{15.5} \\
\rowcolor{gray!20} airport (50)& \textbf{34} & \textbf{34}& \textbf{34.0} & 33.9& \textbf{33.0} & 32.9 \\
barman11 (20)& \textbf{20} & \textbf{20}& \textbf{20.0} & 19.9& 25.8 & \textbf{25.9} \\
barman14 (14)& \textbf{14} & \textbf{14}& \textbf{14.0} & 13.9& \textbf{17.2} & 17.1 \\
blocks (35)& \textbf{35} & \textbf{35}& 22.7 & \textbf{35.0}& 56.4 & \textbf{56.8} \\
\rowcolor{gray!20} childsnack14 (20)& \textbf{15} & \textbf{15}& \textbf{15.0} & \textbf{15.0}& \textbf{20.0} & 19.9 \\
\rowcolor{gray!20}  data-net18 (20)& \textbf{20} & \textbf{20}& \textbf{20.0} & \textbf{20.0}& \textbf{29.6} & 29.5 \\
depot (22)& 20 & \textbf{21}& 18.8 & \textbf{20.1}& 21.3 & \textbf{22.4} \\
driverlog (20)& \textbf{20} & \textbf{20}& 17.8 & \textbf{19.5}& 26.7 & \textbf{30.7} \\
elevators08 (30)& \textbf{30} & \textbf{30}& 28.3 & \textbf{28.4}& \textbf{48.1} & 47.9 \\
elevators11 (20)& \textbf{20} & \textbf{20}& 18.8 & \textbf{19.2}& \textbf{32.4} & 32.3 \\
\rowcolor{gray!20}  floortile11 (20)& \textbf{9} & \textbf{9}& \textbf{8.7} & 8.6& \textbf{6.1} & 5.8 \\
\rowcolor{gray!20}  floortile14 (20)& \textbf{9} & 8& \textbf{8.7} & 7.1& \textbf{2.9} & 2.5 \\
\rowcolor{gray!20} freecell (80)& 77 & \textbf{79}& 74.7 & \textbf{75.4}& 81.0 & \textbf{83.1} \\
ged14 (20)& \textbf{20} & \textbf{20}& \textbf{20.0} & 19.6& \textbf{32.2} & 31.9 \\
grid (5)& \textbf{5} & \textbf{5}& 4.9 & \textbf{5.0}& \textbf{6.5} & \textbf{6.5} \\
gripper (20)& \textbf{20} & \textbf{20}& \textbf{20.0} & \textbf{20.0}& \textbf{32.9} & 32.3 \\
\rowcolor{gray!20} hiking14 (20)& \textbf{20} & \textbf{20}& \textbf{20.0} & \textbf{20.0}& \textbf{24.0} & \textbf{24.0} \\
logistics00 (28)& \textbf{28} & \textbf{28}& 27.6 & \textbf{27.7}& \textbf{46.4} & 46.2 \\
logistics98 (35)& \textbf{35} & 34& \textbf{34.9} & 33.7& \textbf{41.0} & 40.9 \\
\rowcolor{gray!20} miconic (150)& \textbf{150} & \textbf{150}& \textbf{150.0} & \textbf{150.0}& \textbf{231.6} & 231.3 \\
\rowcolor{gray!20} movie (30)& \textbf{30} & \textbf{30}& 26.2 & \textbf{30.0}& 54.5 & \textbf{54.6} \\
mprime (35)& \textbf{35} & \textbf{35}& \textbf{35.0} & \textbf{35.0}& \textbf{47.2} & \textbf{47.2} \\
\rowcolor{gray!20} mystery (30)& \textbf{19} & \textbf{19}& 18.9 & \textbf{19.0}& \textbf{25.7} & 25.6 \\
nomystery11 (20)& \textbf{16} & \textbf{16}& \textbf{16.0} & \textbf{16.0}& \textbf{22.5} & 22.3 \\
\rowcolor{gray!20} openstacks08 (30)& \textbf{30} & \textbf{30}& 20.6 & \textbf{30.0}& \textbf{45.9} & 43.9 \\
\rowcolor{gray!20} openstacks11 (20)& \textbf{20} & \textbf{20}& 13.1 & \textbf{20.0}& \textbf{30.5} & 29.0 \\
\rowcolor{gray!20} openstacks14 (20)& \textbf{20} & \textbf{20}& 12.2 & \textbf{20.0}& \textbf{26.4} & 24.5 \\
\rowcolor{gray!20} openstacks (30)& \textbf{30} & \textbf{30}& 29.8 & \textbf{30.0}& \textbf{36.0} & 34.8 \\
\rowcolor{gray!20} org-syn18 (20)& \textbf{7} & 6& \textbf{7.0} & 6.0& \textbf{9.2} & 5.3 \\
\rowcolor{gray!20} org-syn-spl18 (20)& \textbf{18} & 11& \textbf{18.0} & 11.0& \textbf{17.7} & 12.0 \\
\rowcolor{gray!20} parcprinter08 (30)& \textbf{30} & 21& \textbf{28.6} & 21.0& \textbf{47.0} & 27.4 \\
parcprinter11 (20)& \textbf{20} & 15& \textbf{19.0} & 15.0& \textbf{31.5} & 17.7 \\
parking11 (20)& \textbf{20} & 18& \textbf{18.9} & 15.6& \textbf{20.2} & 12.3 \\
parking14 (20)& \textbf{20} & 18& \textbf{19.8} & 15.3& \textbf{21.0} & 13.8 \\
\rowcolor{gray!20} pathways (30)& \textbf{23} & \textbf{23}& \textbf{23.0} & 22.9& \textbf{32.3} & \textbf{32.3} \\
pegsol08 (30)& \textbf{30} & 29& 26.2 & \textbf{28.5}& \textbf{43.2} & 34.1 \\
pegsol11 (20)& \textbf{20} & 19& 18.1 & \textbf{18.7}& \textbf{26.6} & 19.7 \\
pipes-notank (50)& \textbf{43} & \textbf{43}& 33.1 & \textbf{40.6}& 47.4 & \textbf{53.8} \\
pipes-tank (50)& \textbf{43} & 37& \textbf{35.6} & 35.4& \textbf{41.4} & 39.6 \\
\rowcolor{gray!20} psr-small (50)& \textbf{50} & \textbf{50}& \textbf{50.0} & 49.1& \textbf{87.0} & 86.0 \\
\rowcolor{gray!20} rovers (40)& \textbf{40} & \textbf{40}& \textbf{40.0} & \textbf{40.0}& \textbf{54.1} & \textbf{54.1} \\
\rowcolor{gray!20} satellite (36)& \textbf{36} & \textbf{36}& 33.4 & \textbf{35.9}& 41.7 & \textbf{42.8} \\
\rowcolor{gray!20} scanalyzer08 (30)& \textbf{30} & \textbf{30}& \textbf{27.9} & 27.4& \textbf{36.8} & 34.1 \\
\rowcolor{gray!20} scanalyzer11 (20)& \textbf{20} & \textbf{20}& \textbf{18.8} & 17.9& \textbf{25.0} & 23.5 \\
sokoban08 (30)& \textbf{29} & \textbf{29}& 26.4 & \textbf{27.0}& \textbf{32.4} & 30.4 \\
sokoban11 (20)& \textbf{20} & \textbf{20}& \textbf{18.7} & 18.5& \textbf{22.8} & 21.6 \\
spider18 (20)& \textbf{19} & 16& \textbf{17.8} & 15.4& \textbf{14.3} & 8.8 \\
storage (30)& 19 & \textbf{21}& 16.9 & \textbf{19.9}& 26.5 & \textbf{27.7} \\
tetris14 (17)& \textbf{17} & 10& \textbf{15.3} & 9.3& \textbf{13.1} & 9.1 \\
\rowcolor{gray!20} tidybot11 (20)& \textbf{18} & \textbf{18}& 17.8 & \textbf{17.9}& 15.5 & \textbf{15.7} \\
\rowcolor{gray!20} tidybot14 (20)& \textbf{19} & \textbf{19}& \textbf{19.0} & 18.8& \textbf{13.5} & 13.3 \\
\rowcolor{gray!20} tpp (30)& \textbf{30} & \textbf{30}& \textbf{30.0} & \textbf{30.0}& \textbf{38.4} & 38.0 \\
transport08 (30)& \textbf{30} & \textbf{30}& \textbf{29.9} & 29.7& \textbf{42.8} & 42.6 \\
transport11 (20)& \textbf{20} & \textbf{20}& \textbf{20.0} & 19.9& \textbf{29.7} & 29.5 \\
transport14 (20)& \textbf{20} & \textbf{20}& 19.9 & \textbf{20.0}& \textbf{28.1} & 28.0 \\
\rowcolor{gray!20} trucks (30)& 15 & \textbf{17}& 15.0 & \textbf{17.0}& 16.1 & \textbf{17.8} \\
\rowcolor{gray!20} visitall11 (20)& \textbf{20} & \textbf{20}& \textbf{20.0} & 16.4& \textbf{33.1} & 32.4 \\
\rowcolor{gray!20} visitall14 (20)& \textbf{20} & \textbf{20}& \textbf{20.0} & 16.5& \textbf{29.1} & 27.9 \\
woodwork08 (30)& \textbf{30} & \textbf{30}& \textbf{29.9} & 27.6& \textbf{40.9} & 36.5 \\
woodwork11 (20)& \textbf{20} & \textbf{20}& \textbf{19.9} & 18.7& \textbf{26.8} & 24.0 \\
zenotravel (20)& \textbf{20} & \textbf{20}& \textbf{19.9} & 19.3& \textbf{29.7} & 29.6 \\ \hline
Total (1767)& \textbf{1637} & 1598& 1544.3 & \textbf{1554.3}& \textbf{2154.1} & 2058.8 \\
    \end{tabular}
    \caption{\lamafirst Coverage, SAT and AGL scores when solving the original ($\task$) and compiled ($\stickytask$) tasks. Bold figures indicate best performance.}
    \label{tab:ipc_table_lamafirst}
\end{table}
\begin{table}[]
\footnotesize
\setlength{\tabcolsep}{0.5pt}
\setlength\extrarowheight{-2pt}
    \centering
    \begin{tabular}{l||r|r||r|r||r|r||}
         & \multicolumn{2}{c||}{Coverage} & \multicolumn{2}{c||}{SAT Score} & \multicolumn{2}{c||}{AGL Score} \\ \hline
         Domain (\#Tasks) & $\task$ & $\stickytask$ & $\task$ & $\stickytask$ & $\task$ & $\stickytask$ \\ \hline

\rowcolor{gray!20} airport (50)& \textbf{28} & \textbf{28}& \textbf{28.0} & \textbf{28.0}& 33.3 & \textbf{33.4} \\

blocks (35)& \textbf{28} & \textbf{28}& \textbf{28.0} & \textbf{28.0}& 35.8 & \textbf{36.6} \\


elevators11 (20)& \textbf{18} & \textbf{18}& \textbf{18.0} & \textbf{18.0}& \textbf{10.7} & 10.3 \\

\rowcolor{gray!20} floortile14 (20)& \textbf{6} & 5& \textbf{6.0} & 5.0& \textbf{1.9} & 1.6 \\

gripper (20)& \textbf{7} & 5& \textbf{7.0} & 5.0& \textbf{7.1} & 5.5 \\

logistics00 (28)& \textbf{20} & \textbf{20}& \textbf{20.0} & \textbf{20.0}& \textbf{23.7} & 23.2 \\

\rowcolor{gray!20} openstacks08 (30)& \textbf{21} & 20& \textbf{21.0} & 20.0& \textbf{17.7} & 16.0 \\

\rowcolor{gray!20} org-syn-spl18 (20)& \textbf{15} & 11& \textbf{15.0} & 11.0& \textbf{13.4} & 10.3 \\

pegsol11 (20)& \textbf{18} & 11& \textbf{18.0} & 11.0& \textbf{15.8} & 3.3 \\

rovers (40)& \textbf{7} & \textbf{7}& \textbf{7.0} & \textbf{7.0}& \textbf{9.9} & 9.8 \\

sokoban11 (20)& \textbf{20} & \textbf{20}& \textbf{20.0} & \textbf{20.0}& \textbf{18.3} & 17.0 \\
tetris14 (17)& \textbf{6} & 3& \textbf{6.0} & 3.0& \textbf{3.3} & 2.2 \\

\rowcolor{gray!20} trucks (30)& \textbf{10} & \textbf{10}& \textbf{10.0} & \textbf{10.0}& \textbf{9.0} & 8.8 \\

woodwork11 (20)& \textbf{12} & 11& \textbf{12.0} & 11.0& \textbf{10.3} & 8.8 \\
\hline
Total (1767)& \textbf{917} & 887& \textbf{917.0} & 887.0& \textbf{1006.2} & 943.2 \\
    \end{tabular}
    \caption{\lmcut Coverage, SAT and AGL scores when solving the original ($\task$) and compiled ($\stickytask$) tasks.}
    \label{tab:ipc_table_lmcut}
\end{table}

\paragraph{Experimental Setting.} We selected all the \textsc{strips} tasks from the optimal suite of the Fast Downward~\cite{helmert2006fast} benchmark collection\footnote{https://github.com/aibasel/downward-benchmarks}.
This gives us $1767$ tasks divided across $62$ domains.
We reformulated each task using our approach and solved both the original (\task) and the compiled (\stickytask) tasks using two planners.
The first one, \lamafirst,  is a state-of-the-art planner that won the agile track of the last International Planning Competition (IPC)\footnote{https://ipc2023-classical.github.io/}.
\lamafirst runs the first iteration of the well-known \textsc{lama} planner~\cite{richter2010lama}, aiming to find a solution as soon as possible, disregarding its quality.
It combines, among others: deferred heuristic evaluation,  preferred operators, and multiple open lists guided by the \textsc{ff}~\cite{hoffmann2001ff} and \textsc{landmark-sum}~\cite{hoffmann2004ordered} heuristics.
The second one, \lmcut, is an optimal planner that runs $A^*$ with the admissible \textsc{lmcut} heuristic.
We run both planners with a 8GB memory bound and a time limit of $900$s on Intel Xeon E5-2666 v3 CPUs @ 2.90GHz. 
We will report the following metrics, where higher numbers indicate better performance.

\underline{Coverage}: number of problems solved by the planner.

\underline{SAT Score}: $C^*/C$, where $C^*$ is the lowest cost found for the task, and $C$ is the plan's cost the planner gets in the task. 

\underline{AGL Score}: $1 - log(T)/log(900)$, where $T$ is the planner's runtime, including grounding and search time.  

\paragraph{Results.}
Tables~\ref{tab:ipc_table_lamafirst} and \ref{tab:ipc_table_lmcut} show the Coverage, SAT and AGL scores when solving the original (\task) and the compiled (\stickytask) tasks with \lamafirst and \lmcut, respectively.
Due to space constraints, we present only a subset of the results for \lmcut.
Domains with problems where goal propositions cannot be falsified once achieved are highlighted in gray.
These tasks do not require reformulation into $\Sticky$ planning tasks, but we aimed to verify the general applicability of our compilation across different problem structures.
As observed, both planners achieve slightly better overall results when solving \task, attaining higher scores across most metrics compared to \stickytask. 
The performance gap varies by domain and planner, yet \stickytask tasks can always be solved comparably to \task, both optimally and suboptimally. 
This holds true even in domains such as \textsc{openstacks} or \textsc{rovers}, where our $\Sticky$ compilation is unnecessary due to the permanence of goals. 
This emphasizes the non-intrusive nature of our compilation, providing the advantage of reasoning about goal achievement during the planning phase without compromising search efficiency.
\section{Conclusions and Future Work}
We introduced a compilation that extends the original planning task with $\Sticky$ actions that allow planners to enforce the persistence of specific goals once achieved, thus committing to them in the search sub-tree.
This new task structure allows us to determine at planning time when an action permanently achieves a goal.
This opens up research opportunities for reasoning and optimizing goal achievement distribution, potentially benefiting various planning applications.

Certain domains, such as \textsc{blocks}, consistently benefit from our reformulation, indicating that allowing planners to greedily commit to achieving specific goals may enhance search performance in certain situations. 
In other domains such as \textsc{openstacks}, performance improvements appear to stem from an internal state representation that is more compatible with the specific planner being used~\cite{vallati2015effective}.
We intend to further investigate this phenomenon and its implications as part of our future work.

\section*{Disclaimer}
This paper was prepared for informational purposes by the Artificial Intelligence Research group of JPMorgan Chase \& Co. and its affiliates ("JP Morgan'') and is not a product of the Research Department of JP Morgan. JP Morgan makes no representation and warranty whatsoever and disclaims all liability, for the completeness, accuracy or reliability of the information contained herein. This document is not intended as investment research or investment advice, or a recommendation, offer or solicitation for the purchase or sale of any security, financial instrument, financial product or service, or to be used in any way for evaluating the merits of participating in any transaction, and shall not constitute a solicitation under any jurisdiction or to any person, if such solicitation under such jurisdiction or to such person would be unlawful.

\bibliographystyle{kr}
\bibliography{references}

\end{document}